\newcommand{\T}{^\top}
\newcommand{\R}{\mathbb R}
\newcommand{\ExpBy}[3]{\mathbb{E}^{#2}\left[ \left. #1 \right| #3 \right]}
\newtheorem{Theorem}{Theorem}
\newtheorem{corollary}{Corollary}
\title{Emphatic TD Bellman Operator is a Contraction}
\author{Assaf Hallak \\
Electrical Engineering Dept.\\
Technion\\
\small{\texttt{ifogph@gmail.com}}
\And
Aviv Tamar \\
Electrical Engineering Dept.\\
Technion\\
\small{\texttt{avivt@tx.technion.ac.il}}
\And
Shie Mannor \\
Electrical Engineering Dept.\\
Technion\\
\small{\texttt{shie@ee.technion.ac.il}}
}
\begin{document}
%\twocolumn[
%\icmltitle{Emphatic TD Bellman Operator is a Contraction}
%\icmlauthor{Assaf Hallak}{ifogph@gmail.com}
%\icmlauthor{Aviv Tamar}{avivt@tx.technion.ac.il}
%\icmlauthor{Shie Mannor}{shie@ee.technion.ac.il}
%\icmladdress{Electrical Engineering Department,
%The Technion - Israel Institute of Technology, Haifa, Israel 32000}
%\vskip 0.3in
%]
\maketitle

\begin{abstract}
Recently, \citet{SuttonMW15} introduced the emphatic temporal differences (ETD) algorithm for off-policy evaluation in Markov decision processes. In this short note, we show that the projected fixed-point equation that underlies ETD involves a contraction operator, with a $\sqrt{\gamma}$-contraction modulus (where $\gamma$ is the discount factor). This allows us to provide error bounds on the approximation error of ETD. To our knowledge, these are the first error bounds for an off-policy evaluation algorithm under general target and behavior policies.
\end{abstract}

\section{Introduction}
In Reinforcement Learning (RL; \citealt{sutton_reinforcement_1998}), \emph{policy-evaluation} refers to the problem of evaluating the value function -- a mapping from states to their long-term discounted return under a given policy, using sampled observations of the system dynamics and reward. Policy-evaluation is important both for assessing the quality of a policy, but also as a sub-procedure for policy optimization \citep{sutton_reinforcement_1998}.

For systems with large or continuous state-spaces, an exact computation of the value function is often impossible. Instead, an \emph{approximate} value-function is sought using various function-approximation techniques (\citealt{sutton_reinforcement_1998}; a.k.a. approximate dynamic-programming; \citealt{Ber2012DynamicProgramming}). In this approach, the parameters of the value-function approximation are tuned using machine-learning inspired methods, often based on the \emph{temporal-difference} idea (TD;\citealt{sutton_reinforcement_1998}).

 The method generating the sampled data leads to two different types of policy evaluation. In the \emph{on-policy} case, the samples are generated by the \emph{target-policy} -- the policy under evaluation, while in the \emph{off-policy} setting, a different \emph{behavior-policy} generates the data. In the on-policy setting, TD methods are well understood, with classic convergence guarantees and approximation-error bounds, based on a contraction property of the projected Bellman operator underlying TD \citep{BT96}. For the off-policy case, however, standard TD methods no longer maintain this contraction property, the error bounds do not hold, and these methods may even diverge \citep{baird1995residual}.

Recently, \citet{SuttonMW15} proposed the \emph{emphatic TD} (ETD) algorithm: a modification of the TD idea that can be shown to converge off-policy \citep{yu2015etd}. In this paper, we show that the projected Bellman operator underlying ETD also possesses a contraction property, which allows us to derive approximation-error bounds for ETD.

In recent years, several different off-policy policy-evaluation algorithms have been proposed and analyzed, such as importance-sampling based least-squares TD \citep{yu2012least}, gradient-based TD \citep{sutton2009fast}, and ETD \citep{SuttonMW15}. While these algorithms were shown to converge, to our knowledge there are no guarantees on the \emph{error} of the converged solution. The only exception that we are aware of, is a contraction-based argument for importance-sampling based LSTD, under the restrictive assumption that the behavior and target policies are very similar \citep{bertsekas2009projected}. This paper presents the first approximation-error bounds for off-policy policy evaluation under general target and behavior policies.

\section{Preliminaries}
We consider an MDP $M=(S, A, P, R, \gamma, \rho)$, where $S$ is the state space, $A$ is the action space, $P$ is the transition probability matrix, $R$ is the reward function, $\gamma\in [0,1)$ is the discount factor, and $\rho$ is the initial state distribution.

Given a target policy $\pi$, our goal is to evaluate the \emph{value function}:
\begin{equation*}
    V^\pi(s) \doteq \ExpBy{\sum_{t=0}^\infty R(s_t,a_t)}{\pi}{s_0 = s}.
\end{equation*}

Temporal difference methods \citep{sutton_reinforcement_1998}, approximate the value function by
\begin{equation*}
    V^\pi(s) \approx \theta \T \phi(s)  ,
\end{equation*}
where $\phi(s)\in \R^n$ are state features, and $\theta \in \R^n$ are weights, and use sampling to find a suitable $\theta$.
Let $\mu$ denote a behavior policy that generates the samples $s_0,a_0,s_1,a_1,\dots$ according to $a_t \sim \mu(\cdot|s_t)$ and $s_{t+1}\sim P(\cdot|s_t,a_t)$. We denote by $\rho_t$ the ratio $\pi(a_t|s_t)/\mu(a_t|s_t)$, and we assume, similarly to \citet{SuttonMW15}, that $\mu$ and $\pi$ are such that $\rho_t$ is well-defined for all $t$.

Let $T^\pi$ denote the Bellman operator for policy $\pi$, given by $$T^\pi V = R_\pi + \gamma P_\pi V,$$ where $R_\pi$ and $P_\pi$ are the reward vector and transition matrix induced by policy $\pi$, and let $\Phi$ denote a matrix whose columns are the feature vectors for all states. Let $d_\mu$ and $d_\pi$ denote the stationary distributions over states induced by the policies $\mu$ and $\pi$, respectively. For some $d\in \R^{|S|}$ satisfying $d>0$ element-wise, we denote by $\Pi_d$ a projection to the subspace spanned by $\phi(s)$ with respect to the $d$-weighted Euclidean-norm.

Similarly to \cite{SuttonMW15}, we divide the analysis to the `pure bootstrapping' case $\lambda=0$, and the more general case with $\lambda \in [0,1)$. The ETD($0$) algorithm  iteratively updates the weight vector $\theta$ according to:
\begin{equation*}
\begin{split}
    \theta_{t+1} &:= \theta_t + \alpha F_t \rho_t (R_{t+1} + \gamma \theta_t \T \phi_{t+1} - \theta_t \T \phi_t) \phi_t \\
    F_t &= \gamma \rho_{t-1}F_{t-1} + 1, \quad F_0 = 1.
\end{split}
\end{equation*}

The emphatic weight vector $f$ is defined by
\begin{equation}\label{eq:f}
    f \T = d_\mu \T (I - \gamma P_\pi)^{-1}.
\end{equation}

The ETD($\lambda$) algorithm iteratively updates the weight vector $\theta$ according to
\begin{equation*}
\begin{split}
\theta_{t+1} &:= \theta_t + \alpha (R_{t+1} + \gamma \theta_t \T \phi_{t+1} - \theta_t \T \phi_t) e_t \\
e_t &= \rho_t (\gamma \lambda e_{t-1} + M_t \phi_t ), \quad e_{-1} = 0 \\
M_t &= \lambda i(S_t) + (1-\lambda)F_t \\
F_t &= \rho_{t-1} \gamma F_{t-1} + i(S_t), \quad F_0 = i(S_0),
\end{split}
\end{equation*}
where $i:S\rightarrow \mathcal{R}^+$ is a known given function signifying the importance of the state. Note that \citet{SuttonMW15} consider state-dependent discount factor $\gamma(s)$ and bootstrapping parameter $\lambda(s)$, while in this paper we consider the special case where $\gamma$ and $\lambda$ are constant.

The emphatic weight vector $m$ is defined by
\begin{equation}\label{eq:m}
m \T = \textbf{i} \T (I - P^\lambda_\pi)^{-1},
\end{equation}
where:
\begin{equation*}
\begin{split}
\textbf{i}(s) &= i(s) \cdot d_\mu (s), \\
P^\lambda_\pi &= I - (I-\gamma \lambda P_\pi)^{-1} (I-\gamma P_\pi).
\end{split}
\end{equation*}
Notice that in the case of general $\lambda$, the Bellman operator is:
\begin{equation}
T^{(\lambda)} v = (I - \gamma \lambda P_\pi)^{-1}r_\pi + P^\lambda_\pi v.
\end{equation}

\citet{Mahmood2015ETD} show that ETD converges to some $\theta^*$ that is a solution of the projected fixed-point equation:
\begin{equation*}
    \theta \T \Phi  = \Pi_m T^{(\lambda)} (\theta \T \Phi).
\end{equation*}
In this paper, we establish that the projected Bellman operator $\Pi_m T^{(\lambda)}$ is a contraction, which allows us to bound the error $\| \Phi \T \theta^* - V^\pi \|_m$.
%We follow the notation of \citep{SuttonMW15}:
%\begin{itemize}
%	\item
%	\item $\phi(s)$ is the feature vector of state $s \in S$. $\Phi$ is the matrix whose columns are the feature vectors for all states.
%	\item A Euclidean weighted $d$-norm $\| \cdot \|_d$ is defined by:
%	\begin{equation}
%	\| v \|^2_d = \sum_i d(i) v^2(i)
%	\end{equation}
%	\item $\Pi_d$ is the projection operator into the subspace spanned by $\phi(s)$ with respect to the $d$-norm.
%	\item $V^\pi(s)$ is the value function of policy $\pi$ starting at state $s$:
%	\begin{equation}
%	v(s) = \mathbb{E}_\pi \left[ \sum_{t=0}^\infty \gamma^t r_t \given[\Big]  s_0 = s \right]
%	\end{equation}
%	\item $\mu$ is the behavior policy which generated the samples, $\pi$ is the target policy which we want to evaluate. $P_\mu, P_\pi$ are the corresponding induced transition matrices, and $d_\mu, d_\pi$ are the corresponding induced stationary distributions over states.
%	\item $T(v) = r + \gamma P_\pi v$ is the Bellman operator. $\Pi T (v) = \Pi (r + \gamma P v)$ is the projected Bellman operator.
%	\item $f = (I - \gamma P_\pi)^{-1} d_\mu$ is the empathic weight vector, and $F = diag(f)$.
%\end{itemize}

\section{Results}
We start from ETD($0$). It is well known that $T^\pi$ is a $\gamma$-contraction with respect to the $d_\pi$-weighted Euclidean norm \citep{BT96}. However, it is not immediate that the concatenation $\Pi_f T^\pi$ is a contraction in any norm. Indeed, for the TD(0) algorithm \cite{sutton_reinforcement_1998}, a similar representation as a projected Bellman operator holds, but it may be shown that in the off-policy setting the algorithm diverges \citep{baird1995residual}.
%, and that the projection operator weighted Euclidean norm is non-expanding in that norm \citep{BT96}.
%In the on-policy setting, in which $\mu=\pi$, the projection is with respect to the $d_\pi$-weighted Euclidean norm and the projected Bellman operator $\Pi_{d_\pi}T$ is a $\gamma$-contraction. (THIS IS FOR TD(0))

%However,
%in the off-policy case, the projection is with respect to a different norm, and the norm mismatch prevents concatenation in order to obtain a contracting projected Bellman operator. We show here that for the empathic weighted norm, $T$ is still a contraction.
%
%We define the empathic projected Bellman operator by $\Pi_f T(v)$.
The following theorem shows that for ETD($0$), the projected Bellman operator $\Pi_f T^\pi$ is indeed a contraction.
\begin{Theorem}\label{thm:one}
Denote by $\kappa = \min_s \frac{d_\mu (s)}{f(s)} $, then $\Pi_f T^\pi$ is a $\sqrt{\gamma (1 - \kappa)}$-contraction with respect to the Euclidean $f$-weighted norm, namely,
\begin{equation*}
    \| \Pi_f T^\pi v_1 - \Pi_f T^\pi v_2 \|_f \leq \sqrt{\gamma(1-\kappa)} \|v_1 - v_2\|_f,\quad \forall v_1,v_2\in \R^{|S|}.
\end{equation*}
\end{Theorem}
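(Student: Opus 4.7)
The plan is to reduce the contraction bound to a statement purely about the linear map $\gamma P_\pi$ and then exploit the defining identity of $f$. Since $T^\pi v = R_\pi + \gamma P_\pi v$ is affine, we have $T^\pi v_1 - T^\pi v_2 = \gamma P_\pi (v_1-v_2)$, and the projection $\Pi_f$ is non-expansive in the $f$-weighted Euclidean norm. Thus it suffices to show
\begin{equation*}
\gamma^2 \|P_\pi u\|_f^2 \leq \gamma(1-\kappa)\,\|u\|_f^2 \qquad \text{for every } u\in\mathbb{R}^{|S|},
\end{equation*}
after which applying $\Pi_f$ on the left only makes the inequality tighter.

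Next, I would exploit the fact that each row of $P_\pi$ is a probability distribution: Jensen's inequality (equivalently, Cauchy-Schwarz against the row) gives $(P_\pi u)(s)^2 \leq \sum_{s'} P_\pi(s,s')\, u(s')^2$. Multiplying by $f(s)$ and summing over $s$ yields
\begin{equation*}
\|P_\pi u\|_f^2 \;\leq\; \sum_{s'} (f\T P_\pi)(s')\, u(s')^2.
\end{equation*}
Now the crucial identity: the definition $f\T = d_\mu\T (I-\gamma P_\pi)^{-1}$ in \eqref{eq:f} rearranges to $\gamma f\T P_\pi = f\T - d_\mu\T$, i.e., $\gamma(f\T P_\pi)(s') = f(s') - d_\mu(s')$ elementwise. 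Substituting this into the previous display immediately produces
\begin{equation*}
\gamma \|P_\pi u\|_f^2 \;\leq\; \sum_{s'} \bigl(f(s') - d_\mu(s')\bigr) u(s')^2 \;=\; \sum_{s'} f(s')\left(1 - \tfrac{d_\mu(s')}{f(s')}\right) u(s')^2.
\end{equation*}

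From here I would finish by factoring out the worst-case ratio: using $d_\mu(s')/f(s') \geq \kappa$ for every $s'$ bounds each parenthesis by $1-\kappa$, giving $\gamma \|P_\pi u\|_f^2 \leq (1-\kappa)\|u\|_f^2$. Taking square roots and combining with the projection step yields the claimed $\sqrt{\gamma(1-\kappa)}$ modulus.

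The main obstacle I foresee is conceptual rather than computational: one has to recognize that although $P_\pi$ need not be non-expansive in the $f$-weighted norm (this is exactly what breaks TD off-policy), the emphatic weighting $f$ is defined \emph{precisely} so that $f\T P_\pi$ matches $f\T$ up to the correction $d_\mu\T/\gamma$. That identity is what turns the naive Jensen bound into a genuine contraction, and the contraction factor $(1-\kappa)$ quantifies how much of $f$ is ``not already accounted for'' by $d_\mu$. Everything else is an essentially forced calculation.
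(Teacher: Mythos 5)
Your proof is correct and follows essentially the same route as the paper's: Jensen's inequality applied row-wise to $P_\pi$, the identity $\gamma f\T P_\pi = f\T - d_\mu\T$ from \eqref{eq:f}, the pointwise bound $d_\mu(s)/f(s) \geq \kappa$, and the non-expansiveness of $\Pi_f$. The only (cosmetic) difference is that you substitute the identity directly into the Jensen bound, whereas the paper first establishes $\|v\|_f^2 - \gamma\|P_\pi v\|_f^2 \geq \|v\|_{d_\mu}^2$ and then lower-bounds $\|v\|_{d_\mu}^2$ by $\kappa\|v\|_f^2$.
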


\begin{proof}
Let $F = diag(f)$. We have
\begin{equation}
\begin{split}
\| v \|^2_f - \gamma \| P_\pi v \|^2_f &= v^\top F v - \gamma v^\top P_\pi^\top F P_\pi v  \\
&\geq^{a} v^\top F v - \gamma v^\top diag(f^\top P_\pi)v \\
& = v^\top [F - \gamma diag(f^\top P_\pi)]v \\
& = v^\top \left[diag \left(f^\top (I-\gamma P_\pi) \right)  \right] v \\
&=^{b} v^\top diag(d_\mu) v = \| v \|^2_{d_\mu},
\end{split}
\end{equation}
where (a) follows from the Jensen inequality:
\begin{equation}
\begin{split}
v^\top P_\pi^\top F P_\pi v &= \sum_s f(s) ( \sum_{s'} P_\pi(s'|s) v(s') )^2 \\
&\leq \sum_s f(s)  \sum_{s'}P_\pi(s'|s)  v^2(s') \\
&= \sum_{s'} v^2(s') \sum_s f(s) P_\pi(s'|s) \\
&= v^\top diag(f^\top P_\pi)v,
\end{split}
\end{equation}

and (b) is by the definition of $f$ in \eqref{eq:f}.

Notice that for every $v$:
\begin{equation}
\| v \|^2_{d_\mu} = \sum_s d_\mu(s) v^2(s)  \geq \sum_s \kappa f(s) v^2(s)   = \kappa \| v \|^2_f
\end{equation}

Therefore:
\begin{equation}
\begin{split}
\| v \|^2_f & \geq \gamma \|P_\pi v \| ^2_f + \| v \|^2_{d_\mu} \geq \gamma \|P_\pi v \| ^2_f + \kappa \| v \|^2_f, \\
\Rightarrow & \quad \gamma \|P_\pi v\|^2_f \leq (1-\kappa) \| v \|^2_f
\end{split}
\end{equation}
and:
\begin{equation}
\begin{split}
  \|T^\pi v_1 - T^\pi v_2 \|^2_f  &= \| \gamma P_\pi(v_1 - v_2) \| ^2_f \\
  & = \gamma^2 \| P_\pi (v_1 - v_2) \|^2_f \\
  &\leq \gamma (1-\kappa) \| v_1 - v_2 \| ^2_f .
\end{split}
\end{equation}
Hence, $T$ is a $\sqrt{\gamma(1-\kappa)}$-contraction. Since $\Pi_f$ is a non-expansion in the $f$-weighted norm \citep{BT96}, $\Pi_f T$ is a $\sqrt{\gamma(1-\kappa)}$-contraction as well.
\end{proof}
Notice that $\kappa$ obtains values ranging from $\kappa = 0$ (when there is a state visited by the target policy, but not the behavior policy), to $\kappa = 1-\gamma$ (when the two policies are identical). In the latter case we obtain the classical bound: $\sqrt{\gamma(1-\kappa)}=\gamma$. This result resembles that of \cite{kolter2011fixed} who used the discrepancy between the behavior and the target policy to bound the TD-error.

An immediate consequence of Theorem \ref{thm:one} is the following error bound, based on Lemma 6.9 of \citet{BT96}.
\begin{corollary}\label{corr:err}
We have
\begin{equation*}
    \| \Phi \T \theta^* - V^\pi \|_f \leq \frac{1}{1 - \sqrt{\gamma (1-\kappa)}} \| \Pi_f V^\pi - V^\pi \|_f .
\end{equation*}
\end{corollary}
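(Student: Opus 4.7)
The plan is to follow the standard projected-Bellman-operator error-bound argument, leveraging the contraction modulus $\sqrt{\gamma(1-\kappa)}$ just established in Theorem~\ref{thm:one}.

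First I would apply the triangle inequality in the $f$-weighted norm after inserting $\Pi_f V^\pi$:
\begin{equation*}
\| \Phi\T \theta^* - V^\pi \|_f \le \| \Phi\T \theta^* - \Pi_f V^\pi \|_f + \| \Pi_f V^\pi - V^\pi \|_f.
\end{equation*}
Next, I would rewrite the first term on the right using two fixed-point identities. By construction $\Phi\T \theta^* = \Pi_f T^\pi \Phi\T \theta^*$, and since $V^\pi$ is the fixed point of $T^\pi$ we also have $\Pi_f V^\pi = \Pi_f T^\pi V^\pi$. Therefore
\begin{equation*}
\| \Phi\T \theta^* - \Pi_f V^\pi \|_f = \| \Pi_f T^\pi \Phi\T \theta^* - \Pi_f T^\pi V^\pi \|_f \le \sqrt{\gamma(1-\kappa)}\, \| \Phi\T \theta^* - V^\pi \|_f,
\end{equation*}
where the last inequality is precisely Theorem~\ref{thm:one}.

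Substituting back into the triangle inequality and collecting the $\| \Phi\T \theta^* - V^\pi \|_f$ terms on the left would give
\begin{equation*}
\bigl(1 - \sqrt{\gamma(1-\kappa)}\bigr) \| \Phi\T \theta^* - V^\pi \|_f \le \| \Pi_f V^\pi - V^\pi \|_f,
\end{equation*}
from which the claim follows by dividing through; the factor is strictly positive since $\gamma(1-\kappa)<1$.

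There is no real obstacle here: the derivation is mechanical once Theorem~\ref{thm:one} is in hand, and the only additional ingredient is the characterization of $\theta^*$ as the fixed point of $\Pi_f T^\pi$, already recorded in the preliminaries via \citet{Mahmood2015ETD}. I note that a tighter denominator $\sqrt{1-\gamma(1-\kappa)}$ could be obtained by replacing the triangle inequality with the Pythagorean identity for the $f$-weighted orthogonal projection $\Pi_f$, which is the form of Lemma~6.9 in \citet{BT96}; but the triangle-inequality version above already matches the corollary as stated.
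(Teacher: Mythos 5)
Your proof is correct and is essentially the argument the paper relies on: the paper gives no explicit derivation, simply invoking Lemma~6.9 of \citet{BT96}, and your triangle-inequality-plus-fixed-point computation is precisely the standard proof behind that citation, yielding the stated $\frac{1}{1-\sqrt{\gamma(1-\kappa)}}$ factor. Your side remark that the Pythagorean identity for $\Pi_f$ would sharpen the denominator to $\sqrt{1-\gamma(1-\kappa)}$ is also accurate, but not needed to match the corollary as stated.
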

In a sense, the error $\| \Pi_f V^\pi - V^\pi \|_f$ is the best approximation we can hope for, within the capability of our linear approximation architecture. Corollary \ref{corr:err} guarantees that we are not too far away from it.

Now we move on to the analysis of ETD($\lambda$):

\begin{Theorem}\label{thm:two}
	$\Pi_m T^{(\lambda)}$ is a $\sqrt{\beta}$-contraction with respect to the Euclidean $f$-weighted norm, where $\beta = \frac{\gamma (1-\lambda)}{1-\lambda \gamma}$. Namely,
	\begin{equation*}
	\| \Pi_m T^{(\lambda)} v_1 -  \Pi_m T^{(\lambda)} v_2 \|_m \leq \sqrt{\beta} \|v_1 - v_2\|_m,\quad \forall v_1,v_2\in \R^{|S|}.
	\end{equation*}
\end{Theorem}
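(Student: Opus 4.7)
The plan is to mirror the structure of Theorem~\ref{thm:one}, with the emphatic weight $m$ playing the role of $f$ and the matrix $P^\lambda_\pi$ playing the role of $\gamma P_\pi$. Because the affine part of $T^{(\lambda)}$ cancels, $T^{(\lambda)} v_1 - T^{(\lambda)} v_2 = P^\lambda_\pi(v_1-v_2)$, and $\Pi_m$ is a non-expansion in the $m$-weighted norm, everything reduces to proving the single operator bound $\|P^\lambda_\pi v\|_m \le \sqrt{\beta}\,\|v\|_m$ for all $v$.

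First I would rewrite $P^\lambda_\pi$ in a Neumann-series form that makes its structural properties transparent. Starting from $P^\lambda_\pi = I - (I-\gamma\lambda P_\pi)^{-1}(I-\gamma P_\pi)$, expanding $(I-\gamma\lambda P_\pi)^{-1} = \sum_{k\ge 0}(\gamma\lambda)^k P_\pi^k$ and collecting terms, one obtains
\begin{equation*}
P^\lambda_\pi \;=\; \gamma(1-\lambda)\sum_{k=0}^{\infty} (\gamma\lambda)^k P_\pi^{k+1}.
\end{equation*}
Since each $P_\pi^{k+1}$ is row-stochastic, this exhibits $P^\lambda_\pi$ as a nonnegative matrix whose every row sums to $\gamma(1-\lambda)/(1-\gamma\lambda) = \beta < 1$. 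Equivalently, $P^\lambda_\pi = \beta Q$ for a row-stochastic $Q$.

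Next, for each state $s$, apply Jensen's inequality exactly as in step~(a) of Theorem~\ref{thm:one}, but with the substochastic structure above:
\begin{equation*}
\bigl((P^\lambda_\pi v)(s)\bigr)^2 \;=\; \beta^2\Bigl(\sum_{s'}\tfrac{P^\lambda_\pi(s,s')}{\beta} v(s')\Bigr)^2 \;\le\; \beta \sum_{s'} P^\lambda_\pi(s,s')\, v^2(s').
\end{equation*}
Multiplying by $m(s)$, summing, and swapping the order of summation gives
\begin{equation*}
\|P^\lambda_\pi v\|_m^2 \;\le\; \beta \sum_{s'} v^2(s')\,(m^\top P^\lambda_\pi)(s').
\end{equation*}
Now invoke the definition of $m$ in \eqref{eq:m}: $m^\top(I-P^\lambda_\pi) = \mathbf{i}^\top \ge 0$, so $m^\top P^\lambda_\pi \le m^\top$ element-wise. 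Combined with $v^2 \ge 0$, this yields $\|P^\lambda_\pi v\|_m^2 \le \beta\,\|v\|_m^2$, which is the desired operator bound.

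Finally, applying this with $v = v_1 - v_2$, together with the non-expansiveness of $\Pi_m$ in the $m$-weighted norm \citep{BT96}, delivers the claimed $\sqrt{\beta}$-contraction. The only nontrivial step is the first one, deriving the clean Neumann representation of $P^\lambda_\pi$ and reading off row-nonnegativity with row-sum $\beta$; once that is in hand, the Jensen argument and the key identity $m^\top P^\lambda_\pi = m^\top - \mathbf{i}^\top$ carry the argument in complete parallel with the $\lambda=0$ case.
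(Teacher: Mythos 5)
Your proof is correct and follows essentially the same route as the paper: normalize $P^\lambda_\pi$ by its row sum $\beta$, apply Jensen's inequality, and use the identity $m^\top(I-P^\lambda_\pi)=\mathbf{i}^\top\ge 0$ from \eqref{eq:m} to conclude $\|P^\lambda_\pi v\|_m^2\le\beta\|v\|_m^2$, then finish with the non-expansiveness of $\Pi_m$. The only difference is cosmetic: the paper verifies $P^\lambda_\pi\mathbf{1}=\beta\mathbf{1}$ directly and simply asserts entrywise nonnegativity, whereas your Neumann expansion $P^\lambda_\pi=\gamma(1-\lambda)\sum_{k\ge0}(\gamma\lambda)^kP_\pi^{k+1}$ establishes both facts at once and is, if anything, slightly more explicit.
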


\begin{proof}
	The proof is almost identical to the proof of Theorem $\ref{thm:one}$, only now we cannot apply Jensen's inequality directly, since the rows of $P^\lambda_\pi$ do not sum to $1$. However:
	\begin{equation}
	P^\lambda_\pi \textbf{1} = \left( I - (I-\gamma \lambda P_\pi)^{-1} (I-\gamma P_\pi) \right) \textbf{1} = \beta \textbf{1},
	\end{equation}
	and each entry of $P^\lambda_\pi$ is positive. Therefore $\frac{P^\lambda_\pi}{\beta}$ will hold for Jensen's inequality.
	Let $M = diag(m)$, we have
	\begin{equation}
	\begin{split}
	\| v \|^2_m - \frac{1}{\beta} \| P_\pi v \|^2_m &= v^\top M v - \beta v^\top \frac{P^\lambda_\pi}{\beta}^\top M \frac{P^\lambda_\pi}{\beta} v  \\
	&\geq^{a} v^\top M v - \beta v^\top diag(m^\top \frac{P^\lambda_\pi}{\beta})v \\
	& = v^\top [M -  diag(m^\top P^\lambda_\pi)]v \\
	& = v^\top \left[diag \left(m^\top (I- P^\lambda_\pi) \right)  \right] v \\
	&=^{b} v^\top diag(\textbf{i}) v = \| v \|^2_{\textbf{i}},
	\end{split}
	\end{equation}
	where (a) follows from the Jensen inequality and (b) from Equation \ref{eq:m}.
	
	Therefore:
	\begin{equation}
	\| v \|^2_m \geq \frac{1}{\beta} \| P^\lambda_\pi v \| ^2_m + \| v \|^2_{\textbf{i}} \geq \frac{1}{\beta} \| P^\lambda_\pi v \| ^2_m,
	\end{equation}
	and:
	\begin{equation}
	\|T^{(\lambda)} v_1 - T^{(\lambda)} v_2 \|^2_m  = \| P^\lambda_\pi (v_1 - v_2) \| ^2_m \leq \beta \| v_1 - v_2 \| ^2_m .
	\end{equation}
	Hence, $T^{(\lambda)}$ is a $\sqrt{\beta}$-contraction. Since $\Pi_m$ is a non-expansion in the $m$-weighted norm \citep{BT96}, $\Pi_m T^{(\lambda)}$ is a $\sqrt{\beta}$-contraction as well.
\end{proof}

As before, Theorem \ref{thm:two} leads to the following error bound, based on Theorem 1 of \citet{tsitsiklis1997analysis}.
\begin{corollary}\label{corr:err2}
	We have
	\begin{equation*}
	\| \Phi \T \theta^* - V^\pi \|_m \leq \frac{1}{1 - \sqrt{\beta}} \| \Pi_m V^\pi - V^\pi \|_m .
	\end{equation*}
\end{corollary}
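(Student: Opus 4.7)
The plan is to adapt the classical projected-contraction error bound (as in Lemma 6.9 of Bertsekas--Tsitsiklis or Theorem~1 of Tsitsiklis--Van Roy) using the $\sqrt{\beta}$-contraction of $\Pi_m T^{(\lambda)}$ just established in Theorem~\ref{thm:two}. Writing $v^\star = \Phi^\top \theta^\star$, I would insert $\Pi_m V^\pi$ and apply the triangle inequality in the $m$-weighted norm:
\begin{equation*}
\|v^\star - V^\pi\|_m \leq \|v^\star - \Pi_m V^\pi\|_m + \|\Pi_m V^\pi - V^\pi\|_m.
\end{equation*}
The second summand is already exactly the quantity appearing on the right-hand side of the corollary, so all the work consists in controlling the first summand by a multiple of $\|v^\star - V^\pi\|_m$ itself.

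To do this, I would use two fixed-point facts. First, by the convergence result of Mahmood et al. cited earlier in the paper, $v^\star = \Pi_m T^{(\lambda)} v^\star$. Second, a short calculation confirms that $V^\pi$ is a fixed point of $T^{(\lambda)}$: substituting $V^\pi = (I-\gamma P_\pi)^{-1} r_\pi$ into $T^{(\lambda)} v = (I-\gamma\lambda P_\pi)^{-1} r_\pi + P^\lambda_\pi v$ and using the definition $P^\lambda_\pi = I - (I-\gamma\lambda P_\pi)^{-1}(I-\gamma P_\pi)$ makes the $(I-\gamma\lambda P_\pi)^{-1}$ factors cancel, leaving $T^{(\lambda)} V^\pi = V^\pi$. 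Consequently $\Pi_m V^\pi = \Pi_m T^{(\lambda)} V^\pi$, and Theorem~\ref{thm:two} gives
\begin{equation*}
\|v^\star - \Pi_m V^\pi\|_m = \|\Pi_m T^{(\lambda)} v^\star - \Pi_m T^{(\lambda)} V^\pi\|_m \leq \sqrt{\beta}\,\|v^\star - V^\pi\|_m.
\end{equation*}

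Plugging this back into the triangle inequality and moving the $\sqrt{\beta}\|v^\star-V^\pi\|_m$ term to the left yields $(1-\sqrt{\beta})\|v^\star - V^\pi\|_m \leq \|\Pi_m V^\pi - V^\pi\|_m$, which after dividing by the positive number $1-\sqrt{\beta}$ is exactly the claimed bound. (Note that $\sqrt{\beta} < 1$ whenever $\gamma \in [0,1)$ and $\lambda \in [0,1)$, so division is legal.)

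I do not expect any real obstacle here: once Theorem~\ref{thm:two} is available and $V^\pi$ is recognized as a fixed point of $T^{(\lambda)}$, the argument is a two-line manipulation. The only minor subtlety worth flagging is that this direct triangle-inequality route produces the constant $\tfrac{1}{1-\sqrt{\beta}}$, which is looser than the Pythagorean $\tfrac{1}{\sqrt{1-\beta}}$ constant one could obtain by exploiting the orthogonality of $\Pi_m V^\pi - V^\pi$ to the feature span in the $m$-weighted inner product; I will use the simpler direct form since it matches the constant stated in the corollary.
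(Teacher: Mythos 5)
Your proposal is correct and matches the route the paper itself takes: the paper simply invokes the classical projected-contraction error bound (Theorem 1 of Tsitsiklis and Van Roy) on top of Theorem~\ref{thm:two}, and your triangle-inequality argument with the two fixed-point facts ($v^\star = \Pi_m T^{(\lambda)} v^\star$ and $T^{(\lambda)} V^\pi = V^\pi$) is exactly the standard derivation of that bound. Your side remark that a Pythagorean argument would sharpen the constant to $1/\sqrt{1-\beta}$ is also consistent with the paper's own discussion of the bound being conservative.
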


We now show in an example that our contraction modulus bounds are tight.
\paragraph{Example} Consider an MDP with two states: Left and Right. In each state there are two identical actions leading to either Left or Right deterministically. The behavior policy will choose Right with probability $\epsilon$, and the target policy will choose Left with probability $\epsilon$. Calculating the quantities of interest:
\begin{equation*}
\begin{split}
P_\pi = \left( \begin{array}{cc}
\epsilon & 1-\epsilon  \\
\epsilon & 1-\epsilon
\end{array} \right)
, \quad d_\mu = \left( 1-\epsilon, \epsilon \right) \\
f = \frac{1}{1-\gamma} \left( 1 + 2 \epsilon \gamma - \epsilon - \gamma , -2 \epsilon \gamma + \epsilon + \gamma \right) \T.
\end{split}
\end{equation*}
So for $v = \left( 0, 1 \right) \T$:
\begin{equation*}
\| v \|^2_f = \frac{\epsilon + \gamma  - 2\epsilon\gamma}{1-\gamma}, \quad \| P_\pi v \|^2_f = \frac{ (1-\epsilon)^2 }{1-\gamma},
\end{equation*}
and for small $\epsilon$ we obtain that $\frac{\| \gamma P_\pi v \|^2}{\| v \|^2_f} \approx \gamma $.

\section{Discussion}

Interestingly, the ETD error bounds in Corollary \ref{corr:err} and \ref{corr:err2} are more conservative by a factor of square root than the error bounds for standard on-policy TD \citep{BT96, tsitsiklis1997analysis}. Thus, it appears that there is a price to pay for off-policy convergence. Future work should address the implications of the different norms in these bounds.

Nevertheless, we believe that the results in this paper motivate ETD (or its least-squares counterpart; \citealt{yu2015etd}) as the method of choice for off-policy policy-evaluation in MDPs.

\bibliography{ContractionBib}
\bibliographystyle{icml2015}

\end{document}